\begin{document}
\title{Rough sets and matroidal contraction}         % Enter your title between curly braces

\author{Jingqian Wang, William Zhu~\thanks{Corresponding author.
E-mail: williamfengzhu@gmail.com (William Zhu)}}

\institute{
Lab of Granular Computing, \\
Zhangzhou Normal University, Zhangzhou, China}

%%%%%%%%%%%%%%%%%%%%%%%%%%%%%%%%%%%%%%%%%%%%%%%%%%%%%%%%%%%%%%%%%%%%%%%%%%%%%%%%%%%%%%%%%%%%%%%%%%%%%%%%%%%%%%%%%%%%%%%%%%%%%%
% Enter your name between curly braces

%\date{\today}  % Enter your date or \today between curly braces

\date{\today}          % Enter your date or \today between curly braces
\maketitle

\large
\begin{abstract}
Rough sets are efficient for data pre-processing in data mining.
As a generalization of the linear independence in vector spaces, matroids provide well-established platforms for greedy algorithms.
In this paper, we apply rough sets to matroids and study the contraction of the dual of the corresponding matroid.
First, for an equivalence relation on a universe, a matroidal structure of the rough set is established through the lower approximation operator.
Second, the dual of the matroid and its properties such as independent sets, bases and rank function are investigated.
Finally, the relationships between the contraction of the dual matroid to the complement of a single point set and the contraction of the dual matroid to the complement of the equivalence class of this point are studied.
%%These results show the potential for designing at tributer eduction algorithms using matroidal approaches.

\textbf{Keywords.}~~rough set; approximation operator; matroid; contraction
\end{abstract}

%\begin{keywords}
%\end{keywords}

%%%%%%%%%%%%%%%%%%%%%%%%%%%%%%%%%%%%
%%%%%%%%%%%%%%%%%%%%%%%%%%%%%%%%%%%%
\section{Introduction}
\label{section0}
The theory of rough sets was proposed by Pawlak~\cite{Pawlak91Rough,Pawlak82Rough} in 1982 as a tool to conceptualize, organize and analyze various types of data in data mining.
This theory is especially useful for dealing with uncertain and vague knowledge in information systems.
Using the concepts of lower and upper approximations in rough set theory, knowledge hidden in information systems may be unraveled and expressed in the form of decision rules.
The basic operators in rough set theory are approximation operators.
Many examples of applications of the rough
set theory to process control, economics, medical diagnosis, biochemistry, environmental science, biology, chemistry psychology, conflict analysis and other fields can be found in \cite{Liu06TheTransitive,LiuWang06Semantic,HungSankarSkowron11Rough,QinPeiDu05TheRelationship,WangLiu00TheInconsistency,WuZhang06Rough,YaoChen05Subsystem,ZhongYaoOhshima03peculiarity,ZhuWang06Covering}.

Matroids~\cite{Lai01Matroid,Li07Some,Mao06TheRelation,WangZhu11Matroidal,ZhuWang11Matroidal} are structures that generalize linear independence in vector spaces.
They borrow extensively linear algebra and graph theory, and have a variety of applications in combinatorial optimization, integer programming, secret communication, and so on.
Recently, the combination of rough sets and matroids has been studied in \cite{LiLiu12Matroidal,WangZhu11Matroidal,WangZhuZhu10Abstract,ZhuWang11Matroidal,ZhuWang11Rough}.

In this paper, a matroidal structure of rough sets is constructed, and then the contraction of the dual of the matroid is studied.
First, a matroid is induced by the lower approximation operator in rough sets.
Second, the dual of the matroid is investigated.
Finally, the relationships between the contraction of the dual matroid to the complement of a single point set and the contraction of the dual matroid to the complement of the equivalence class of this point are studied.

The rest of this paper is organized as follows. Section~\ref{section1} reviews some fundamental concepts about rough sets and matroids.
In Section~\ref{section2}, a matroid is induced by the lower approximation operator in rough sets, and the relationships between the contraction of the dual matroid to the complement of a single point set and the contraction of the dual matroid to the complement of the equivalence class of this point are studied.
Finally, this paper is concluded and further work is indicated in Section~\ref{section3}.
%%%%%%%%%%%%%%%%%%%%%%%%%%%%%%%%%%%%%%%%%%%%%%%%%%%%%%%%%%%%%%%%%%%%%%%%%%%%%%%
%%%%%%%%%%%%%%%%%%%%%%%%%%%%%%%%%%%%%%%%%%%%%%%%%%%%%%%%%%%%%%%%%%%%%%%%%%%%%%%
\section{Basic definitions}
\label{section1}

This section recalls some fundamental definitions related to rough sets and matroids.

\begin{definition}(Approximation space~\cite{Yao98Relational,Yao98Constructive})
\label{definition1}
Let $U$ be a nonempty and finite set called universe and $R$ an equivalence relation on $U$.
The ordered pair $(U,R)$ is called a Pawlak's approximation space.
\end{definition}

In rough sets, a pair of approximation operators are used to describe an object. In the following definition, a widely used pair of
approximation operators are introduced.

\begin{definition}(Approximation operator~\cite{Yao98Relational,Yao98Constructive})
\label{definition2}
Let $R$ be an equivalence relation on $U$.
A pair of approximation operators $R_{*}$, $R^{*}:2^{U}\rightarrow 2^{U}$, are defined as follows: for all $X\subseteq U$,\\
$R_{*}(X)=\{x\in U:RN(x)\subseteq X\}$.\\
$R^{*}(X)=\{x\in U:RN(x)\bigcap X\neq \emptyset\}$.\\
Where $RN(x)=\{y\in U:xRy\}$ is called the equivalence class of $x$, and $RN(x)$ is a member of the partition generated 
by $R$ on $U$.
They are called the lower and upper approximation operators with respect to $R$, respectively.
\end{definition}

Let $\emptyset$ be the empty set, $X^{c}$ the complement of $X$ in $U$, from the definition of approximation operators, we have the following conclusions about them.

\begin{proposition}(~\cite{Yao98Relational,Yao98Constructive})
\label{proposition12}
The properties of the Pawlak's rough sets:\\
(1L) $ R_{*}(U)= U $~~~~~~~~~~~~~~~~~~~~~~~~~~~~~~~~~~(1H) $ R^{*}(U)= U $\\
(2L) $ R_{*}(\phi)= \phi$ ~~~~~~~~~~~~~~~~~~~~~~~~~~~~~~~~~~(2H) $ R^{*}(\phi)= \phi$\\
(3L) $ R_{*}(X)\subseteq X $~~~~~~~~~~~~~~~~~~~~~~~~~~~~~~~~(3H) $X \subseteq R^{*}(X) $\\
(4L) $R_{*}(X \bigcap Y )= R_{*}(X) \bigcap R_{*}(Y )$\\
(4H) $R^{*}(X \bigcup Y )= R^{*}(X) \bigcup R^{*}(Y )$\\
(5L) $R_{*}(R_{*}(X))= R_{*}(X) $~~~~~~~~~~~~~~~~~(5H) $R^{*}(R^{*}(X))= R^{*}(X) $\\
(6L) $X \subseteq Y \Rightarrow R_{*}(X) \subseteq R_{*}(Y )$~~~~~~~(6H) $X \subseteq Y \Rightarrow R^{*}(X) \subseteq R^{*}(Y )$\\
(7L) $R_{*}((R_{*}(X))^{c})= (R_{*}(X))^{c}$~~~~~~~~~(7H) $R^{*}((R^{*}(X))^{c})= (R^{*}(X))^{c}$\\
(8LH) $R_{*}(X^{c})= (R^{*}(X))^{c} $\\
(9LH) $R_{*}(X) \subseteq R^{*}(X)$ 
\end{proposition}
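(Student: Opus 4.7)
The plan is to verify each of the eleven items directly from the definitions of $R_{*}$ and $R^{*}$, leaning on two elementary facts about the equivalence relation $R$: (i) $x \in RN(x)$ for every $x \in U$, and (ii) if $y \in RN(x)$ then $RN(y) = RN(x)$. These two facts, together with the partition structure they induce on $U$, are enough to reduce everything to set-theoretic bookkeeping. I would also establish the duality identity (8LH) early and use it to convert lower-operator statements into upper-operator statements, which immediately halves the work.

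First I would dispatch the boundary and containment laws (1L)--(3H) and the monotonicity laws (6L), (6H) as one-line consequences of the definitions and fact (i). For example, (3L) is immediate because $RN(x) \subseteq X$ forces $x \in X$ by (i), and (3H) is immediate because $x \in RN(x) \cap X$ whenever $x \in X$; (9LH) then follows by combining (3L) and (3H), or by noting that $RN(x) \subseteq X$ together with fact (i) yields $x \in RN(x) \cap X$. The distributive laws (4L), (4H) are proved by rewriting ``$RN(x) \subseteq X \cap Y$'' as the conjunction ``$RN(x) \subseteq X$ and $RN(x) \subseteq Y$'' (and dually for union and intersection with a nonempty witness). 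For duality (8LH), I would write $x \in R_{*}(X^{c}) \iff RN(x) \subseteq X^{c} \iff RN(x) \cap X = \emptyset \iff x \notin R^{*}(X)$.

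The only clauses that require a genuine idea are the idempotence laws (5L), (5H) and the closure-under-complement laws (7L), (7H). The unifying observation is that both $R_{*}(X)$ and $R^{*}(X)$ are unions of equivalence classes of $R$: if $x \in R_{*}(X)$ and $y \in RN(x)$, then by fact (ii) $RN(y) = RN(x) \subseteq X$, so $y \in R_{*}(X)$; an analogous argument works for $R^{*}(X)$. From this ``saturation'' property, for any set $A$ that is a union of equivalence classes one obtains $R_{*}(A) = A = R^{*}(A)$, since each $RN(x)$ meeting $A$ lies entirely inside $A$. Applying this with $A = R_{*}(X)$ yields (5L), with $A = R^{*}(X)$ yields (5H), and with $A = (R_{*}(X))^{c}$ or $A = (R^{*}(X))^{c}$ (both complements of unions of classes, hence themselves unions of classes) yields (7L) and (7H).

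I do not anticipate a real obstacle: no step requires more than the partition structure of $U/R$, and the only slightly subtle point is the ``union of equivalence classes'' lemma underlying (5L)--(7H), which is itself a direct consequence of fact (ii). The main risk is simply notational fatigue from writing out eleven short arguments, so I would organize the write-up by proving the saturation lemma once, deriving (5L)--(7H) from it, and stating that (1H)--(6H) follow from (1L)--(6L) through the duality (8LH) combined with De Morgan's laws.
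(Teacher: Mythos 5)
The paper states this proposition without proof, citing Yao's papers, so there is no in-paper argument to compare yours against. Your plan is correct and complete in outline: facts (i) $x\in RN(x)$ and (ii) $y\in RN(x)\Rightarrow RN(y)=RN(x)$ are exactly what the equivalence-relation hypothesis supplies, and they do suffice for all eleven items. The one genuinely non-routine ingredient, the saturation lemma (both $R_{*}(X)$ and $R^{*}(X)$ are unions of $R$-classes, and $R_{*}(A)=A=R^{*}(A)$ for any union of classes $A$), is correctly identified and correctly applied to (5L), (5H), (7L), (7H); the derivation of the H-clauses from the L-clauses via (8LH) and De Morgan is also standard and sound. This is the textbook proof of the Pawlak properties, so the proposal would be accepted as written once the details are filled in.
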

  
Matroids are algebraic structures that generalize linear independence in vector spaces.
They have a variety of applications in integer programming, combinatorial optimization, algorithm design, and so on.
In the following definition, one of the most valuable definitions of matroids is presented in terms of independent sets.

\begin{definition}(Matroid~\cite{Lai01Matroid})
\label{definition3}
A matroid is an ordered pair $M=(U,\textbf{I})$ where $U$(the ground set) is a finite set, and $\textbf{I}$ (the independent sets) a family of subsets of $U$ with the following properties:\\
(I1) $\emptyset \in \textbf{I}$.\\
(I2) If $I\in \textbf{I}$, and $I^{'}\subseteq I$, then $I^{'}\in \textbf{I}$.\\
(I3) If $I_{1}, I_{2}\in \textbf{I}$, and $|I_{1}|< |I_{2}|$, then there exists $e\in I_{2}-I_{1}$, such that $I_{1}\bigcup \{e\}\in \textbf{I}$, where $|I|$ denotes the cardinality of $I$.
\end{definition}

\begin{example}
\label{example1}
Let $U=\{a,b,c,d\}$, $\textbf{I}=\{\emptyset,\{a\},\{b\},\{c\},\{a,b\},\{a,c\},\{b,$ $c\}\}$. Then $(U,\textbf{I})$ is a matroid.
\end{example}

If a subset of the ground set is not an independent set of a matroid, then it is called a dependent set of the matroid.
Based on the dependent set, we introduce the circuit of a matroid.
For this purpose, two denotations are presented.

\begin{definition}(\cite{Lai01Matroid})
\label{definition4}
Let $U$ be a nonempty and finite set and $\textbf{A}\subseteq 2^{U}$ a family of subsets of $U$. One can denote:\\
$Max(\textbf{A})=\{X\in \textbf{A}:\forall Y\in \textbf{A},X\subseteq Y\Rightarrow X=Y\}$.\\
$Min(\textbf{A})=\{X\in \textbf{A}:\forall Y\in \textbf{A},Y\subseteq X\Rightarrow X=Y\}$.
\end{definition}

The dependent set of a matroid generalizes the linear dependence in vector spaces and the cycle in graphs.
Any circuit of a matroid is a minimal dependent set.

\begin{definition}(Circuit~\cite{Lai01Matroid})
\label{definition5}
Let $M=(U,\textbf{I})$ be a matroid.
A minimal dependent set in $M$ is called a circuit of $M$, and we denote the family of all circuits of $M$ by $\textbf{C}(M)$,
i.e., $\textbf{C}(M)=Min(\textbf{I}^{c})$, where $\textbf{I}^{c}=2^{U}-\textbf{I}$.
\end{definition}

\begin{example}(Continued from Example~\ref{example1})
$\textbf{C}(M)=\{\{d\},\{a,b,c\}\}.$
\end{example}

A base of a matroid is a maximal independent set.

\begin{definition}(Base~\cite{Lai01Matroid})
\label{definition6}
Let $M=(U,\textbf{I})$ be a matroid.
A maximal independent set in $M$ is called a base of $M$, and we denote the family of all bases of $M$ by $\textbf{B}(M)$,
i.e., $\textbf{B}(M)=Max(\textbf{I})$.
\end{definition}

\begin{example}(Continued from Example~\ref{example1})
$\textbf{B}(M)=\{\{a,b\},\{a,$ $c\},\{b,c\}\}$.
\end{example}

The following proposition shows that a matroid can be determined by bases.

\begin{proposition}(Base axiom~\cite{Lai01Matroid})
\label{proposition0}
Let $U$ be a finite set and $\textbf{B}$ a family of subsets of $U$.
Then there exists $M=(U,\textbf{I})$ such that $\textbf{B}=\textbf{B}(M)$ iff $\textbf{B}$ satisfies the following two conditions:\\
(B1) $\textbf{B}\neq \emptyset$.\\
(B2) If $B_{1}$, $B_{2}\in \textbf{B}$ and $x\in B_{1}-B_{2}$, then there exists an element $y\in B_{2}-B_{1}$ such that $(B_{1}-\{x\})\bigcup \{y\}\in \textbf{B}$.
\end{proposition}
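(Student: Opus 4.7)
The plan is to prove the two directions separately, using (I3) as the main tool for the forward implication and (B2) as the main tool for the converse, with a maximal-intersection argument bridging the gap to (I3).

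For the ``only if'' direction, I would first note that (B1) is immediate: by (I1) the empty set is independent, and finiteness of $U$ lets any independent set be extended to a maximal one. Next, I would establish the key preliminary that all elements of $\textbf{B}(M)$ have the same cardinality, by contradiction: if $|B_{1}|<|B_{2}|$ then (I3) yields $e\in B_{2}-B_{1}$ with $B_{1}\cup\{e\}\in\textbf{I}$, contradicting the maximality of $B_{1}$. Granted this, (B2) follows by applying (I3) to $I_{1}=B_{1}-\{x\}$ (independent by (I2)) and $I_{2}=B_{2}$: the augmenting element $e$ lies in $B_{2}-B_{1}$ because $x\notin B_{2}$, and the set $(B_{1}-\{x\})\cup\{e\}$ has the common base cardinality, so maximality forces it into $\textbf{B}(M)$.

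For the ``if'' direction, I would define $\textbf{I}:=\{I\subseteq U:I\subseteq B\text{ for some }B\in\textbf{B}\}$ and verify the matroid axioms. (I1) follows from (B1) and (I2) is immediate. Before tackling (I3), I would establish two preparatory facts using only (B2): first, $\textbf{B}$ is an antichain (otherwise (B2) applied to the nested pair would demand a $y$ from an empty difference); second, all $B\in\textbf{B}$ share a common cardinality (take a pair of different sizes with $|B_{1}-B_{2}|$ minimum and apply (B2) to reach a contradiction). The verification of (I3) then proceeds by choosing bases $B_{1}\supseteq I_{1}$ and $B_{2}\supseteq I_{2}$ minimizing $|B_{1}-B_{2}|$ and claiming $I_{2}\subseteq B_{1}$; once this holds, any $e\in I_{2}-I_{1}$ satisfies $I_{1}\cup\{e\}\subseteq B_{1}\in\textbf{B}$, hence $I_{1}\cup\{e\}\in\textbf{I}$. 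A concluding step checks $\textbf{B}(M)=\textbf{B}$ from the antichain and equal-cardinality properties.

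The main obstacle I foresee is the subsidiary claim $I_{2}\subseteq B_{1}$. If some $y\in I_{2}-B_{1}$ existed, I would need to produce a new base $B_{1}'\supseteq I_{1}$ with $|B_{1}'-B_{2}|<|B_{1}-B_{2}|$; but (B2) as stated lets one remove a chosen $x\in B_{1}-B_{2}$, not insert a chosen $y\in B_{2}-B_{1}$. The plan is therefore to first derive the ``symmetric'' form of (B2) (for any prescribed $y\in B_{2}-B_{1}$ some $x\in B_{1}-B_{2}$ admits $(B_{1}-\{x\})\cup\{y\}\in\textbf{B}$), and then arrange that $x\notin I_{1}$ so that the rebuilt base retains $I_{1}$. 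This hidden symmetric-exchange lemma is where the technical weight of the argument really sits.
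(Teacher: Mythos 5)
First, note that the paper does not prove this proposition at all: it is quoted as a known textbook result (the base axiom from Lai's \emph{Matroid theory}) and used without argument, so there is no in-paper proof to compare against; your attempt has to be judged on its own. Your ``only if'' direction is sound: (B1) from (I1) plus finiteness, equicardinality of bases from (I3), and (B2) by applying (I3) to $B_{1}-\{x\}$ and $B_{2}$, with the observation that the augmenting element avoids $B_{1}-\{x\}$ and lies in $B_{2}-B_{1}$ because $x\notin B_{2}$, and that an independent set of full base cardinality is itself a base. The two preparatory facts in the converse (that $\textbf{B}$ is an antichain and that its members are equicardinal, both extracted from (B2) by a minimal-difference argument) are also correct.

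The genuine gap is the pivotal claim in your verification of (I3): that bases $B_{1}\supseteq I_{1}$ and $B_{2}\supseteq I_{2}$ minimizing $|B_{1}-B_{2}|$ satisfy $I_{2}\subseteq B_{1}$. This is false, and no symmetric-exchange lemma can rescue it, because there need not exist \emph{any} member of $\textbf{B}$ containing both $I_{1}$ and $I_{2}$. Take $U=\{a,b,c,d\}$ and $\textbf{B}$ the family of all $2$-element subsets (which satisfies (B1) and (B2)), with $I_{1}=\{a\}$ and $I_{2}=\{b,c\}$: every $B_{1}\supseteq I_{1}$ has exactly two elements, so it cannot contain $\{b,c\}$ as well, yet (I3) holds since $\{a,b\}\in\textbf{I}$. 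What your argument actually needs is the weaker statement that $B_{1}$ can be chosen with $B_{1}\cap(I_{2}-I_{1})\neq\emptyset$. The standard route is: starting from any $B_{1}\supseteq I_{1}$, repeatedly apply (B2) to elements $x\in B_{1}-(I_{1}\cup B_{2})$ (such $x$ may be removed because they are not in $I_{1}$) to reach a base $B_{3}$ with $I_{1}\subseteq B_{3}\subseteq I_{1}\cup B_{2}$; if $B_{3}$ missed all of $I_{2}-I_{1}$, then $I_{2}-I_{1}\subseteq B_{2}-B_{3}$ while $B_{3}-B_{2}\subseteq I_{1}-I_{2}$, and equicardinality of $B_{2}$ and $B_{3}$ forces $|I_{1}-I_{2}|\geq|I_{2}-I_{1}|$, i.e.\ $|I_{1}|\geq|I_{2}|$, a contradiction. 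This counting step, not a symmetric form of (B2), is the missing ingredient; once it is in place the rest of your outline (including the final check that $\textbf{B}(M)=\textbf{B}$ via the antichain and equicardinality properties) goes through.
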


The dimension of a vector space and the rank of a matrix are useful concepts in linear algebra. The rank function of a matroid is a generalization of these two concepts.

\begin{definition}(Rank function~\cite{Lai01Matroid})
\label{definition7}
Let $M=(U,\textbf{I})$ be a matroid. The rank function $r_{M}$ of $M$ is defined as $r_{M}(X)=max\{|I|:I\subseteq X,I\in \textbf{I}\}$ for all $X\subseteq U$. $r_{M}(X)$ is called the rank of $X$ in $M$.
\end{definition}

\begin{example}(Continued from Example~\ref{example1})
Suppose $X_{1}=\{a\}$, $X_{2}=\{a,b,c\}$, $X_{3}=\{a,b,d\}$. Then $r_{M}(X_{1})=1$, $r_{M}(X_{2})=2$, $r_{M}(X_{3})=2 $.
\end{example}

The following proposition presents the properties of the rank function of a matroid.

\begin{proposition}(\cite{Lai01Matroid})
\label{proposition1}
Let $M=(U,\textbf{I})$ be a matroid and $r_{M}$ the rank function of $M$. Then $r_{M}$ satisfies the following conditions:\\
(R1) If $X\in 2^{U}$, then $0\leq r_{M}(X)\leq |X|$.\\
(R2) If $X\subseteq Y\subseteq U$, then $r_{M}(X)\leq r_{M}(Y)$.\\
(R3) If $X, Y\subseteq U$, then $r_{M}(X)+ r_{M}(Y)\geq r_{M}(X\bigcup Y)+r_{M}(X\bigcap Y)$.
\end{proposition}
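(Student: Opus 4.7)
The plan is to handle (R1) and (R2) directly from the definition of $r_M$ and then concentrate on (R3), which is the submodularity inequality and the only substantive part of the statement.

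For (R1), since $\emptyset \in \mathbf{I}$ by (I1) and $\emptyset \subseteq X$, the value $0$ already appears in the set $\{|I| : I \subseteq X,\ I \in \mathbf{I}\}$, forcing $r_M(X)\geq 0$; and every $I \subseteq X$ satisfies $|I|\leq |X|$, so the maximum is bounded by $|X|$. For (R2), every independent subset of $X$ is automatically an independent subset of $Y$ when $X \subseteq Y$, so the maximum over the former is at most the maximum over the latter.

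For (R3), the idea is to exhibit a single independent set that simultaneously witnesses $r_M(X \cup Y)$ and $r_M(X \cap Y)$. First I would fix $I$ to be a maximal independent subset of $X \cap Y$, so $|I| = r_M(X \cap Y)$. Then, using the augmentation axiom (I3) iteratively with candidate elements drawn from a maximum-cardinality independent subset of $X \cup Y$, I would extend $I$ to a maximal independent subset $J$ of $X \cup Y$, obtaining $I \subseteq J$ and $|J| = r_M(X \cup Y)$. Since $J \cap X$ and $J \cap Y$ are independent by (I2) and are contained in $X$ and $Y$ respectively, they yield $r_M(X) \geq |J \cap X|$ and $r_M(Y) \geq |J \cap Y|$.

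The final ingredient is the inclusion--exclusion identity
\[
|J \cap X| + |J \cap Y| \;=\; |J \cap (X \cup Y)| + |J \cap (X \cap Y)|.
\]
The first term on the right equals $|J| = r_M(X \cup Y)$ because $J \subseteq X \cup Y$. For the second term I would use that $I \subseteq J \cap (X \cap Y)$ by construction and that $J \cap (X \cap Y)$ is independent (a subset of $J$), and then invoke the maximality of $I$ inside $X \cap Y$ to conclude $J \cap (X \cap Y) = I$, so $|J \cap (X \cap Y)| = r_M(X \cap Y)$. Adding the two rank bounds and applying the identity yields $r_M(X) + r_M(Y) \geq r_M(X \cup Y) + r_M(X \cap Y)$. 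I expect this last maximality step to be the only subtle point; the extension of $I$ to $J$ via (I3) and the counting identity are routine.
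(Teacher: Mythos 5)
Your proof is correct, and there is nothing in the paper to compare it against: Proposition~\ref{proposition1} is quoted from the matroid theory reference \cite{Lai01Matroid} and the paper supplies no proof of it. Your argument for (R1) and (R2) is immediate from the definition, and your proof of (R3) is the standard one: grow a maximal independent subset $I$ of $X\bigcap Y$ to a maximal independent subset $J$ of $X\bigcup Y$ via (I3), then count $|J\bigcap X|+|J\bigcap Y|$ by inclusion--exclusion. One small remark: the ``subtle'' maximality step you single out at the end is not actually needed --- from $I\subseteq J\bigcap (X\bigcap Y)$ you already get $|J\bigcap (X\bigcap Y)|\geq r_{M}(X\bigcap Y)$, which is the direction of the inequality you want; the equality $J\bigcap (X\bigcap Y)=I$ is true but superfluous. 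The only implicit fact you rely on is that a maximal independent subset of a set attains the maximum cardinality (i.e., all maximal independent subsets of a fixed set are equicardinal), which is itself a routine consequence of (I3) and worth stating if you write this out in full.
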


Given a matroid, we can generate a new matroid through the following proposition.

\begin{proposition}(\cite{Lai01Matroid})
\label{proposition2}
Let $M=(U,\textbf{I})$ be a matroid and $\textbf{B}^{*}=\{U-B:B\in \textbf{B}(M)\}$. Then $\textbf{B}^{*}$ is the family of bases of a matroid on $U$.
\end{proposition}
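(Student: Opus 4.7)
My plan is to verify the two conditions (B1) and (B2) of the base axiom (Proposition~\ref{proposition0}) for the family $\textbf{B}^{*}$. Condition (B1) is routine: since $\emptyset\in\textbf{I}$ and $U$ is finite, $\textbf{B}(M)\neq\emptyset$, so $\textbf{B}^{*}\neq\emptyset$ as well.

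The bulk of the proof is (B2). The key observation is that (B2) for $\textbf{B}^{*}$ is equivalent to a ``dual'' base-exchange statement for $\textbf{B}(M)$: if $B_{1},B_{2}\in\textbf{B}(M)$ and $x\in B_{2}-B_{1}$, then there exists $y\in B_{1}-B_{2}$ such that $(B_{1}-\{y\})\cup\{x\}\in\textbf{B}(M)$. To see this, given $B_{1}^{*},B_{2}^{*}\in\textbf{B}^{*}$ with $x\in B_{1}^{*}-B_{2}^{*}$, set $B_{i}=U-B_{i}^{*}$; then $B_{1}^{*}-B_{2}^{*}=B_{2}-B_{1}$ and $B_{2}^{*}-B_{1}^{*}=B_{1}-B_{2}$, and a direct set-theoretic computation (using $x\notin B_{1}$ and $y\in B_{1}$) shows that $(B_{1}^{*}-\{x\})\cup\{y\}\in\textbf{B}^{*}$ iff $(B_{1}-\{y\})\cup\{x\}\in\textbf{B}(M)$.

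To establish this dual exchange property, I plan to use the fundamental-circuit technique. Since $B_{1}$ is a maximal independent set, $B_{1}\cup\{x\}\notin\textbf{I}$, so it contains some circuit $C$; because $B_{1}$ itself is independent, necessarily $x\in C$. As $B_{2}\in\textbf{I}$ cannot contain the circuit $C$, there is an element $y\in C-B_{2}$, and since $x\in B_{2}$ forces $y\neq x$, we get $y\in B_{1}-B_{2}$. I then need the auxiliary fact that for an independent set $I$ and an element $e\notin I$ with $I\cup\{e\}$ dependent, there is a unique circuit contained in $I\cup\{e\}$; this is a standard matroid lemma that follows by applying (I3) to a hypothetical pair of distinct such circuits. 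Granting uniqueness, deleting $y\in C$ from $B_{1}\cup\{x\}$ destroys the only circuit, leaving $(B_{1}\cup\{x\})-\{y\}=(B_{1}-\{y\})\cup\{x\}$ independent; since this set has size $|B_{1}|=r_{M}(U)$, it is a base.

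The main obstacle is the uniqueness of the fundamental circuit, which is not stated explicitly in the excerpt and is the real matroid-theoretic content behind the symmetric exchange; everything else is essentially bookkeeping with complements. Once that lemma is in hand, putting the pieces together and translating back to $\textbf{B}^{*}$ yields (B2), and Proposition~\ref{proposition0} then delivers the desired matroid on $U$ whose bases are exactly $\textbf{B}^{*}$.
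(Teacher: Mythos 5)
The paper offers no proof of this proposition---it is quoted from \cite{Lai01Matroid} as a known fact---so there is no in-paper argument to compare against; judged on its own, your proposal is the standard textbook derivation of matroid duality and is essentially correct. The reduction of (B2) for $\textbf{B}^{*}$ to the ``dual'' exchange property for $\textbf{B}(M)$ (for $x\in B_{2}-B_{1}$ find $y\in B_{1}-B_{2}$ with $(B_{1}-\{y\})\cup\{x\}\in \textbf{B}(M)$) is a correct set-theoretic translation, and the fundamental-circuit argument does deliver that property. Two points deserve attention. First, you rightly isolate the uniqueness of the fundamental circuit as the crux, but the justification ``applying (I3) to a hypothetical pair of distinct such circuits'' understates the work: the usual derivation takes distinct circuits $C_{1},C_{2}\subseteq B_{1}\cup\{x\}$ (both necessarily containing $x$), picks $f\in C_{2}-C_{1}$, extends the independent set $C_{2}-\{f\}$ to a maximal independent subset $J$ of $C_{1}\cup C_{2}$, notes that $J$ misses both $f$ and some element of $C_{1}$, so $|J|\leq |C_{1}\cup C_{2}|-2<|(C_{1}\cup C_{2})-\{x\}|$, and then applies (I3) to $J$ and the independent set $(C_{1}\cup C_{2})-\{x\}\subseteq B_{1}$ to contradict the maximality of $J$. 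Second, your final step silently uses that all bases are equicardinal, so that an independent set of cardinality $|B_{1}|$ is automatically maximal; this too follows from (I3) but should be stated. With those two routine lemmas made explicit, the argument is complete and correct.
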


The new matroid in the above proposition, whose ground set is $U$ and whose set of bases is $\textbf{B}^{*}$, is called the dual of $M$ and denoted by $M^{*}$.
Generally, the independent sets, the circuits, the bases and the rank function of $M^{*}$ are called the coindependent sets, the cocircuits, the cobases and the corank function of $M$.

\section{Rough sets and matroidal contraction}
\label{section2}
In this section, we firstly establish a matroidal structure of rough sets by the lower approximation operator.
The dual of the matroid and its properties such as independent sets, bases and rank function are provided.
Finally, the relationships between the contraction of the dual matroid to the complement of a single point set and the contraction of the dual matroid to the complement of the equivalence class of this point are studied.\\

In the following proposition, for an equivalence relation on a universe, we establish a family of subsets through the lower approximation operator, and prove it satisfies the independent set axiom of a matroid.
In other words, it determines a matroid.

\begin{proposition}
\label{proposition3}
Let $R$ be an equivalence relation on $U$.
Then $\textbf{I(}R)=\{X\subseteq U:R_{*}(X)=\emptyset \}$ satisfies (I1), (I2) and (I3) of Definition~\ref{definition3}.
\end{proposition}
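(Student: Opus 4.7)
The plan is to verify axioms (I1), (I2) and (I3) in order, with the main leverage coming from a combinatorial reformulation of the condition $R_{*}(X)=\emptyset$. Directly from Definition~\ref{definition2}, $R_{*}(X)\neq\emptyset$ iff some $x\in U$ satisfies $RN(x)\subseteq X$, i.e., iff some equivalence class of $R$ is entirely contained in $X$. Equivalently, $X\in\textbf{I}(R)$ iff $|X\cap P|\leq |P|-1$ for every equivalence class $P$ of the partition induced by $R$. This reformulation is what I will use throughout.

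Axiom (I1) is immediate from property (2L) of Proposition~\ref{proposition12}: $R_{*}(\emptyset)=\emptyset$, hence $\emptyset\in\textbf{I}(R)$. Axiom (I2) follows at once from monotonicity (6L): if $I'\subseteq I$ and $R_{*}(I)=\emptyset$, then $R_{*}(I')\subseteq R_{*}(I)=\emptyset$, so $I'\in\textbf{I}(R)$.

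For the exchange axiom (I3), let $I_{1},I_{2}\in\textbf{I}(R)$ with $|I_{1}|<|I_{2}|$. I decompose each set according to the partition, writing $|I_{k}|=\sum_{P}|I_{k}\cap P|$, where the sum ranges over all equivalence classes $P$. Since $|I_{1}|<|I_{2}|$, pigeonhole forces the existence of some class $P$ with $|I_{1}\cap P|<|I_{2}\cap P|$; in particular $I_{2}\cap P$ cannot be contained in $I_{1}$, so there is some $e\in (I_{2}\cap P)-I_{1}$, and by construction $e\in I_{2}-I_{1}$. Adjoining $e$ to $I_{1}$ grows the intersection with $P$ by one, yielding $|(I_{1}\cup\{e\})\cap P|=|I_{1}\cap P|+1\leq |I_{2}\cap P|\leq |P|-1$, while intersections with every other class $P'$ are unchanged and therefore still bounded by $|P'|-1$. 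By the reformulation above, $I_{1}\cup\{e\}\in\textbf{I}(R)$.

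The only conceptual step is the reformulation of $R_{*}(X)=\emptyset$ in terms of the partition; once that is in hand, (I3) collapses to the standard exchange argument for a partition matroid whose forbidden sets are precisely the equivalence classes. I expect no other technical obstacle, as (I1) and (I2) are one-line invocations of standard rough-set identities.
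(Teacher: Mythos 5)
Your proof is correct and follows essentially the same route as the paper: (I1) and (I2) via properties (2L) and (6L), and (I3) via the partition decomposition $|I_k|=\sum_P |I_k\cap P|$ and a pigeonhole argument producing a class $P$ with $|I_1\cap P|<|I_2\cap P|$. Your explicit reformulation of $R_*(X)=\emptyset$ as $|X\cap P|\leq |P|-1$ for every class $P$, and your remark that adjoining $e$ leaves the other classes untouched, only make explicit what the paper's proof leaves implicit in its final step.
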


\begin{proof}
First, according to Proposition~\ref{proposition12}, $R_{*}(\emptyset)=\emptyset $. Then $ \emptyset \in \textbf{I}(R)$.\\
Second, let $I\in \textbf{I}(R)$, $I^{'}\subseteq I$.
Since $I\in \textbf{I}(R)$, so $R_{*}(I)=\emptyset$.
According to Proposition~\ref{proposition12}, $R_{*}(I^{'})\subseteq R_{*}(I)=\emptyset$. Therefore, $R_{*}(I^{'})=\emptyset $, i.e., $I^{'}\in \textbf{I}(R)$.\\
Third, let the partition generated by $R$ on $U$ be $U/R=\{P_{1},P_{2},\cdots,P_{m}\}$. 
Let $I_{1}$, $I_{2}\in \textbf{I}(R)$ and $|I_{1}|<|I_{2}|$. 
Since $I_{1}=I_{1}\bigcap U$ and $I_{2}=I_{2}\bigcap U$, so $I_{1}=I_{1}\bigcap (\bigcup \limits_{i=1}^{m}P_{i})=\bigcup \limits_{i=1}^{m}(I_{1}\bigcap P_{i})$
and $I_{2}=I_{2}\bigcap (\bigcup \limits_{i=1}^{m}P_{i})=\bigcup \limits_{i=1}^{m}(I_{2}\bigcap P_{i})$.
Since $I_{1}$, $I_{2}\in \textbf{I}(R)$, so $R_{*}(I_{1})=\emptyset$ and $R_{*}(I_{2})=\emptyset$.
Thus, for all $1\leq i\leq m$, $(I_{1}\bigcap P_{i})\subset P_{i}$ and $(I_{2}\bigcap P_{i})\subset P_{i}$.
Since $|I_{1}|<|I_{2}|$, so $|\bigcup \limits_{i=1}^{m}(I_{1}\bigcap P_{i})|<|\bigcup \limits_{i=1}^{m}(I_{2}\bigcap P_{i})|$,
i.e., $\sum\limits_{i=1}^{m}|I_{1}\bigcap P_{i}|< \sum\limits_{i=1}^{m}|I_{2}\bigcap P_{i}|$.
Therefore, there exists $1\leq i\leq m$ such that $|I_{1}\bigcap P_{i}|<|I_{2}\bigcap P_{i}|$ 
(In fact, if for all $1\leq i\leq m$ such that $|I_{1}\bigcap P_{i}|\geq |I_{2}\bigcap P_{i}|$, then $\sum\limits_{i=1}^{m}|I_{1}\bigcap P_{i}|\geq \sum\limits_{i=1}^{m}|I_{2}\bigcap P_{i}|$, i.e., $|I_{1}|\geq |I_{2}|$. It is contradictory with $|I_{1}|<|I_{2}|$).
Thus, $|I_{1}\bigcap P_{i}|<|I_{2}\bigcap P_{i}|<|P_{i}|$,  and there exists $e\in (I_{2}\bigcap P_{i})-(I_{1}\bigcap P_{i})\subseteq I_{2}-I_{1}$ such that $(I_{1}\bigcap P_{i})\bigcup \{e\}\subset P_{i}$, i.e., $R_{*}(I_{1}\bigcup \{e\})=\emptyset$. Hence $I_{1}\bigcup \{e\}\in \textbf{I}(R)$.
This completes the proof.

In conclusion, $\textbf{I}(R)$ satisfies (I1), (I2) and (I3) of Definition~\ref{definition3}. Therefore, there exists a matroid on $U$ such that $\textbf{I}(R)$ is the family of its independent sets, and the matroid is denoted by $M(R)=(U,\textbf{I}(R))$. The family of bases of $M(R)$ denoted by $\textbf{B}(R)$.
\end{proof}

\begin{example}
\label{example2}
Let $U=\{a,b,c,d,e\}$, $R$ an equivalence relation on $U$ and the partition generated by $R$ on $U$ be $U/R=\{\{a,b\},\{c,d,e\}\}$. 
According to Proposition~\ref{proposition3}, $\textbf{I}(R)$
$=\{\emptyset,\{a\},\{b\},\{c\},\{d\},\{e\},\{a,c\},\{a,d\},\{a,$ $e\},\{b,c\},\{b,d\},\{b,e\},\{c,d\},\{c,e\},\{d,e\},\{a,c,d\},\{a,c,e\},\{a,d,e\},$ $\{b,c,d\},\{b,c,e\},\{b,d,e\}\}.$ Therefore, the matroid induced by the lower approximation operator is $M(R)=(U,\textbf{I}(R))$.
\end{example}

The following two corollaries represent the independent sets and the bases of the matroid induced by the lower approximation operator.

\begin{corollary}
\label{corollary1}
Let $R$ be an equivalence relation on $U$ and $M(R)$ the induced matroid. Then $\textbf{I}(R)=\{X\subseteq U:R_{*}(X)=\emptyset\}=\{X\subseteq U:\forall x\in U,RN(x)$
$\nsubseteq X \}.$
\end{corollary}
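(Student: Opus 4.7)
The plan is to observe that the first equality, $\textbf{I}(R)=\{X\subseteq U:R_{*}(X)=\emptyset\}$, is simply the definition of $\textbf{I}(R)$ given in Proposition~\ref{proposition3}, so there is nothing to prove there. Thus all of the work consists of establishing the second equality, and this reduces to unfolding the definition of the lower approximation operator.

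Concretely, I would use Definition~\ref{definition2}, which states $R_{*}(X)=\{x\in U:RN(x)\subseteq X\}$. From this, $R_{*}(X)=\emptyset$ says precisely that the set $\{x\in U:RN(x)\subseteq X\}$ is empty, i.e., that no $x\in U$ satisfies $RN(x)\subseteq X$, which is logically equivalent to saying that for every $x\in U$, $RN(x)\nsubseteq X$. I would spell this out as a two-way implication: if $R_{*}(X)=\emptyset$, then assuming some $x\in U$ had $RN(x)\subseteq X$ would place $x$ in $R_{*}(X)$, a contradiction; conversely, if $RN(x)\nsubseteq X$ for every $x\in U$, then no element of $U$ belongs to $R_{*}(X)$, so $R_{*}(X)=\emptyset$.

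There is essentially no obstacle here, since the statement is a direct rephrasing of the definition of $R_{*}$. The only thing to be careful about is the placement of the quantifier: the condition is that $RN(x)\nsubseteq X$ holds for \emph{all} $x\in U$, not just for $x\in X$ or for $x\notin X$, and this is exactly what emerges from negating ``there exists $x\in U$ with $RN(x)\subseteq X$.'' Chaining the two equalities then gives the corollary.
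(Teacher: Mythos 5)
Your proposal is correct and matches the intended argument: the paper gives no explicit proof for this corollary precisely because, as you observe, the first equality is the definition of $\textbf{I}(R)$ from Proposition~\ref{proposition3} and the second is just the negation of ``there exists $x\in U$ with $RN(x)\subseteq X$'' obtained by unfolding Definition~\ref{definition2}. Your care with the quantifier placement is appropriate and the argument is complete.
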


\begin{corollary}
\label{corollary2}
Let $R$ be an equivalence relation on $U$, $M(R)$ the induced matroid. Then $\textbf{B}(R)=Max(\textbf{I}(R))=\{X\subseteq U:\forall x\in U,|RN(x)$ $\bigcap X|=|RN(x)|-1\}.$
\end{corollary}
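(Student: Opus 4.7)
The first equality is immediate from Definition~\ref{definition6} together with Proposition~\ref{proposition3}, so the real content is the second equality
$Max(\textbf{I}(R))=\{X\subseteq U:\forall x\in U,|RN(x)\cap X|=|RN(x)|-1\}$.
My plan is to use Corollary~\ref{corollary1} to translate independence into the condition $RN(x)\nsubseteq X$ for every $x$, and then argue separately that maximality forces exactly one element of each equivalence class to be missing from $X$.

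For the inclusion $(\subseteq)$, I would take $X\in Max(\textbf{I}(R))$ and fix an arbitrary $x\in U$. From $X\in\textbf{I}(R)$ and Corollary~\ref{corollary1} we get $RN(x)\nsubseteq X$, hence $|RN(x)\cap X|\le |RN(x)|-1$. The key step is the reverse inequality: suppose for contradiction that $|RN(x)\cap X|\le|RN(x)|-2$, so we can pick two distinct points $y,z\in RN(x)\setminus X$. I then claim $X\cup\{y\}$ is still independent, which will contradict maximality. To verify the claim, take any $x'\in U$; if $RN(x')\neq RN(x)$ then $y\notin RN(x')$ and so $RN(x')\cap(X\cup\{y\})=RN(x')\cap X\neq RN(x')$; if $RN(x')=RN(x)$ then $z\in RN(x')\setminus(X\cup\{y\})$, again giving $RN(x')\nsubseteq X\cup\{y\}$. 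Applying Corollary~\ref{corollary1} one more time yields $X\cup\{y\}\in\textbf{I}(R)$, the desired contradiction.

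For the inclusion $(\supseteq)$, suppose $X$ satisfies $|RN(x)\cap X|=|RN(x)|-1$ for every $x\in U$. This immediately gives $RN(x)\nsubseteq X$ for every $x$, so $X\in\textbf{I}(R)$ by Corollary~\ref{corollary1}. It remains to check that $X$ is maximal. For any $e\in U\setminus X$, we have $e\in RN(e)\setminus X$, and $|RN(e)\setminus X|=|RN(e)|-(|RN(e)|-1)=1$, so $RN(e)\setminus X=\{e\}$, i.e.\ $RN(e)\subseteq X\cup\{e\}$. Then Corollary~\ref{corollary1} applied to $X\cup\{e\}$ (with the witness $x=e$) shows $X\cup\{e\}\notin\textbf{I}(R)$, confirming $X\in Max(\textbf{I}(R))$.

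I expect the only delicate point to be the contradiction step in $(\subseteq)$, where I must simultaneously control the equivalence class $RN(x)$ that was already deficient and all other classes; the partition structure of $U/R$ makes this clean because adding a single element $y$ only affects its own class. Everything else is bookkeeping with Corollary~\ref{corollary1}.
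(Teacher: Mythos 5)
Your proof is correct and complete; the paper itself states Corollary~\ref{corollary2} without proof, and your argument via Corollary~\ref{corollary1} (independence $=$ ``no equivalence class is fully contained in $X$'', maximality $=$ ``exactly one element of each class is missing'') is exactly the intended justification. Both directions check out, including the contradiction step where adding $y$ only affects the class $RN(x)$ and the implicit use of heredity (I2) to reduce maximality to single-element extensions.
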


\begin{example}(Continued from Example~\ref{example2})
\label{example3}
$\textbf{B}(R)=\{\{a,c,d\},\{a,c,e\},$ $\{a,d,e\},\{b,c,d\},\{b,c,e\},\{b,d,e\}\}$.
\end{example}

The dual of the matroid and its properties such as independent sets, bases and rank function are investigated in Proposition~\ref{proposition4},
Corollary~\ref{corollary3} and Corollary~\ref{corollary4}.

\begin{proposition}
\label{proposition4}
Let $R$ be an equivalence relation on $U$, $M(R)$ the induced matroid and $\textbf{B}^{*}(R)=\{U-B:B\in \textbf{B}(R)\}$$=\{X\subseteq $ $U:\forall x\in U,|RN(x)\bigcap X|=1\}$. Then $\textbf{B}^{*}(R)$ satisfies the base axiom.
\end{proposition}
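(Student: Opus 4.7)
The plan is to split the statement into two independent claims and dispatch each using machinery already in the paper. The first claim is the set-theoretic identity $\textbf{B}^{*}(R)=\{X\subseteq U:\forall x\in U,|RN(x)\cap X|=1\}$; the second is that this family satisfies the base axiom (B1), (B2) from Proposition~\ref{proposition0}.

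For the identity, I would start from Corollary~\ref{corollary2}, which characterizes $\textbf{B}(R)$ as those $B\subseteq U$ with $|RN(x)\cap B|=|RN(x)|-1$ for every $x\in U$. Fix any $x\in U$ and any $B\in\textbf{B}(R)$. Since $RN(x)\subseteq U$, the set $RN(x)$ is the disjoint union of $RN(x)\cap B$ and $RN(x)\cap(U-B)$, so $|RN(x)\cap(U-B)|=|RN(x)|-|RN(x)\cap B|=1$. This shows the inclusion $\{U-B:B\in\textbf{B}(R)\}\subseteq\{X:\forall x,\,|RN(x)\cap X|=1\}$. For the reverse inclusion, given $X$ with $|RN(x)\cap X|=1$ for all $x$, I would argue symmetrically that $U-X$ satisfies the characterization in Corollary~\ref{corollary2}, hence $U-X\in\textbf{B}(R)$ and $X=U-(U-X)\in\textbf{B}^{*}(R)$. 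A small subtlety is that the condition "for all $x\in U$" in these characterizations is really a condition on each block of the partition $U/R$; since $RN(x)=RN(y)$ whenever $xRy$, one only needs to check one representative per equivalence class, and the verification then reduces to counting inside each block.

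For the base axiom claim, no further work is really required: Proposition~\ref{proposition2} asserts exactly that for any matroid $M=(U,\textbf{I})$, the family $\{U-B:B\in\textbf{B}(M)\}$ is the set of bases of a matroid on $U$. Applying this to $M=M(R)$ from Proposition~\ref{proposition3} gives that $\textbf{B}^{*}(R)$ is the family of bases of a matroid on $U$, and in particular satisfies (B1) and (B2).

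I do not expect a genuine obstacle here; the statement is essentially a translation of Corollary~\ref{corollary2} through complementation combined with Proposition~\ref{proposition2}. The only place care is needed is making sure the counting identity $|RN(x)\cap(U-B)|=|RN(x)|-|RN(x)\cap B|$ is stated cleanly, and that both inclusions in the set identity are written out rather than just one, since the definition of $\textbf{B}^{*}(R)$ uses $\{U-B:B\in\textbf{B}(R)\}$ and matching it with the $RN(x)$-based description requires both directions.
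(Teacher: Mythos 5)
Your proposal is correct and takes essentially the same route as the paper, whose entire proof is a one-line appeal to Proposition~\ref{proposition2}. The extra work you do in verifying the identity $\{U-B:B\in \textbf{B}(R)\}=\{X\subseteq U:\forall x\in U,\ |RN(x)\cap X|=1\}$ via the counting argument $|RN(x)\cap (U-B)|=|RN(x)|-|RN(x)\cap B|$ is a detail the paper leaves implicit, and your version of it is sound.
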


\begin{proof}
According to Proposition~\ref{proposition2}, it is straightforward.
\end{proof}

\begin{example}(Continued from Example~\ref{example2})
\label{example4}
$\textbf{B}^{*}(R)=\{\{a,c\},\{a,d\},\{a,e\},$ $\{b,c\},\{b,d\},\{b,e\}\}$.
\end{example}

According to Proposition~\ref{proposition4}, $\textbf{B}^{*}(R)$ is the family of bases of a matroid, and the matroid denoted by $M^{*}(R)$.
In fact, we know $M^{*}(R)$ is the dual matroid of $M(R)$, and we denote the family of the independent sets and the rank function of $M^{*}(R)$
as $\textbf{I}^{*}(R)$ and $r_{M^{*}(R)}$, respectively.

\begin{corollary}
\label{corollary3}
Let $R$ be an equivalence relation on $U$ and $M^{*}(R)$ the dual of the induced matroid $M(R)$.
Then $\textbf{I}^{*}(R)=Min(\textbf{B}^{*}(R))=\{X\subseteq $ $U:\forall x\in U,|RN(x)\bigcap$ $ X|\leq1\}.$
\end{corollary}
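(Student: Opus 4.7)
The plan is to leverage the explicit description of $\textbf{B}^{*}(R)$ given in Proposition~\ref{proposition4}, together with the elementary fact (valid in any matroid) that the independent sets are precisely the subsets of the bases. So the substantive content is the second equality; the expression $Min(\textbf{B}^{*}(R))$ in the middle is to be read via this standard characterization, since every independent set sits inside some base.

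First I would establish the inclusion ``$\subseteq$''. Let $X\in\textbf{I}^{*}(R)$. Then $X\subseteq B$ for some $B\in\textbf{B}^{*}(R)$. By Proposition~\ref{proposition4}, $|RN(x)\cap B|=1$ for every $x\in U$, so immediately $|RN(x)\cap X|\leq|RN(x)\cap B|=1$ for all $x\in U$. This step is essentially a one-liner.

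For the reverse inclusion ``$\supseteq$'', suppose $X\subseteq U$ satisfies $|RN(x)\cap X|\leq 1$ for every $x\in U$. I would construct an explicit base $B\in\textbf{B}^{*}(R)$ with $X\subseteq B$. Write the partition as $U/R=\{P_{1},\ldots,P_{m}\}$. The hypothesis says $|X\cap P_{i}|\leq 1$ for each $i$. For each $i$ I choose a distinguished point $b_{i}\in P_{i}$ as follows: if $X\cap P_{i}\neq\emptyset$, let $b_{i}$ be the unique element of $X\cap P_{i}$; otherwise pick any $b_{i}\in P_{i}$, which is possible because each block of a partition is nonempty. Set $B=\{b_{1},\ldots,b_{m}\}$. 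By construction $|RN(x)\cap B|=|P_{i}\cap B|=1$ whenever $x\in P_{i}$, so $B\in\textbf{B}^{*}(R)$ by Proposition~\ref{proposition4}, and $X\subseteq B$. Hence $X\in\textbf{I}^{*}(R)$.

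There is no real obstacle: the only place one might slip is in the extension step, and the mild point to keep in mind is simply that equivalence classes are nonempty, so a ``partial selector'' $X$ (at most one element per class) can always be completed to a ``full selector'' $B$ (exactly one element per class). Combining the two inclusions yields the claimed equality.
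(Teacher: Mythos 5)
Your proposal is correct. The paper gives no proof of Corollary~\ref{corollary3} at all --- it is presented as an immediate consequence of Proposition~\ref{proposition4} --- so there is no official argument to compare against; your two inclusions (downward closure of the transversal description of $\textbf{B}^{*}(R)$, and extension of a partial selector to a full selector using the nonemptiness of each block $P_{i}$) are exactly the right way to fill that gap, and both steps are sound. One remark worth making explicit: as literally defined in Definition~\ref{definition4}, $Min(\textbf{B}^{*}(R))$ would just return $\textbf{B}^{*}(R)$ itself (the bases are pairwise incomparable), which cannot equal $\textbf{I}^{*}(R)$ since, e.g., $\emptyset\in\textbf{I}^{*}(R)$; you were right to read that middle expression as the family of subsets of bases rather than as the minimal-element operator, and flagging this as a notational slip in the statement would strengthen your write-up.
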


We denote the dual matroid as $M^{*}(R)=(U,I^{*}(R))$.

\begin{example}(Continued from Example~\ref{example2})
$\textbf{I}^{*}(R)=\{\emptyset,\{a\},\{b\},\{c\},\{d\},$ $\{e\},\{a,c\},\{a,d\},\{a,e\},\{b,c\},\{b,d\},\{b,e\}\}$. Therefore the dual of the matroid $M(R)$ is $M^{*}(R)=(U,I^{*}(R))$.
\end{example}

\begin{corollary}
\label{corollary4}
Let $R$ be an equivalence relation on $U$ and $M^{*}(R)$ the dual of the induced matroid $M(R)$.
For all $x\in U$ and $X\subseteq U$, $r_{M^{*}(R)}(X)=|\{RN(x):RN(x)\bigcap X\neq \emptyset\}|.$
\end{corollary}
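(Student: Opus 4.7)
The plan is to exploit the explicit description of $\textbf{I}^{*}(R)$ supplied by Corollary~\ref{corollary3}: a subset is independent in the dual matroid iff it meets every equivalence class in at most one point. Writing $r_{M^{*}(R)}(X)=\max\{|I|:I\subseteq X,\ I\in\textbf{I}^{*}(R)\}$, the rank is therefore the largest size of a ``partial transversal'' of the partition $U/R$ that is contained in $X$. I will prove equality by two opposite inequalities.

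For the lower bound $r_{M^{*}(R)}(X)\ge |\{RN(x):RN(x)\cap X\neq\emptyset\}|$, I would construct a candidate $I$ directly: enumerate the distinct equivalence classes that meet $X$, say $RN(x_{1}),\ldots,RN(x_{k})$, and pick a representative $y_{i}\in RN(x_{i})\cap X$ for each $i$. Setting $I=\{y_{1},\ldots,y_{k}\}$ gives $I\subseteq X$ with $|I|=k$, and since distinct $RN(x_{i})$ are disjoint, $|RN(x)\cap I|\le 1$ for every $x\in U$; Corollary~\ref{corollary3} then yields $I\in\textbf{I}^{*}(R)$, so the definition of the rank function gives the desired inequality.

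For the upper bound, I would take an arbitrary $I\subseteq X$ with $I\in\textbf{I}^{*}(R)$ and observe that, because $R$ is an equivalence relation, the family $\{RN(x):x\in U\}$ partitions $U$ and hence partitions $I$ into the pieces $\{RN(x)\cap I:RN(x)\cap I\neq\emptyset\}$. By Corollary~\ref{corollary3} each nonempty piece has size exactly one, so $|I|$ equals the number of equivalence classes meeting $I$; since $I\subseteq X$, every such class also meets $X$, giving $|I|\le|\{RN(x):RN(x)\cap X\neq\emptyset\}|$. Taking the maximum over $I$ concludes the argument.

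There is no real obstacle here, since the characterization in Corollary~\ref{corollary3} reduces the statement to a counting fact about transversals of a partition; the only care needed is the bookkeeping to make sure that ``number of distinct classes $RN(x)$'' is correctly interpreted (classes, not representatives), which is handled automatically by indexing over $U/R$ rather than over $x\in U$.
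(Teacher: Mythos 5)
Your proof is correct, and it follows the route the paper itself intends: the paper states Corollary~\ref{corollary4} without proof as an immediate consequence of the characterization of $\textbf{I}^{*}(R)$ in Corollary~\ref{corollary3} together with the definition of the rank function, and your two-inequality transversal argument simply makes that deduction explicit. Both directions (choosing one representative per class meeting $X$, and noting that any coindependent subset of $X$ meets each class at most once) are carried out correctly.
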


\begin{example}(Continued from Example~\ref{example2})
Suppose $X_{1}=\{a,c\}$ and $X_{2}=\{c,d,e\}$. Then $r_{M^{*}(R)}(X_{1})=|\{\{a,b\},\{c,d,e\}\}|=2$ and $r_{M^{*}(R)}(X_{2})=|\{\{c,d,e\}\}|=1$.
\end{example}

The following proposition shows a relationship between the ranks of two subsets of a universe.

\begin{proposition}(\cite{Lai01Matroid})
\label{proposition5}
Let $U$ be a finite set and $r:2^{U}\rightarrow Z$ a function satisfying conditions (R2) and (R3) of Proposition~\ref{proposition1}.
If $X,Y\subseteq U$ such that for all $y\in Y-X$, $r(X)=r(X\bigcup \{y\})$, then $r(X)=r(X\bigcup Y)$.
\end{proposition}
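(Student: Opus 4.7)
The plan is to induct on the cardinality $n=|Y-X|$. The base case $n=0$ is trivial, since then $Y\subseteq X$ and so $X\cup Y=X$. For $n=1$, say $Y-X=\{y\}$, we have $X\cup Y=X\cup\{y\}$ and the conclusion is exactly the hypothesis. The work is in the inductive step.

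For the inductive step, I would pick any $y_{0}\in Y-X$ and apply submodularity (R3) to the two sets
\[
A=X\cup\{y_{0}\},\qquad B=X\cup(Y-\{y_{0}\}).
\]
A direct set-theoretic check gives $A\cup B=X\cup Y$ and, using $y_{0}\notin X$ together with $y_{0}\notin Y-\{y_{0}\}$, the intersection $A\cap B$ simplifies to $X$. Hence (R3) reads
\[
r(X\cup\{y_{0}\})+r(X\cup(Y-\{y_{0}\}))\geq r(X\cup Y)+r(X).
\]

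Now I would substitute two equalities into the left-hand side: the hypothesis gives $r(X\cup\{y_{0}\})=r(X)$ directly, and the inductive hypothesis applies to the pair $X,\, Y-\{y_{0}\}$ (whose difference set $(Y-\{y_{0}\})-X$ has size $n-1$, and the single-element condition $r(X)=r(X\cup\{y\})$ still holds for every $y$ in this smaller difference since it is a subset of $Y-X$) to give $r(X\cup(Y-\{y_{0}\}))=r(X)$. Substituting, we obtain $2r(X)\geq r(X\cup Y)+r(X)$, i.e. $r(X)\geq r(X\cup Y)$. The reverse inequality $r(X)\leq r(X\cup Y)$ follows from monotonicity (R2), finishing the induction.

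The only mildly delicate point is choosing the correct split to feed into submodularity: one must separate a single element $y_{0}$ from the rest of $Y-X$ so that the intersection collapses to $X$ exactly, letting (R3) propagate the ``one-step'' equality hypothesis to the full union. The rest is bookkeeping.
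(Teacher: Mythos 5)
Your proof is correct. The paper itself gives no proof of this proposition---it is quoted from the matroid theory reference \cite{Lai01Matroid}---and your argument (induction on $|Y-X|$, splitting off a single element $y_{0}$ and applying submodularity to $X\cup\{y_{0}\}$ and $X\cup(Y-\{y_{0}\})$, whose intersection collapses to $X$, then closing with monotonicity) is exactly the standard textbook derivation of this lemma.
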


The following two definitions show two special matroids.

\begin{definition}(Restriction~\cite{Lai01Matroid})
\label{definition8}
Let $M=(U,\textbf{I})$ be a matroid and $X\subseteq U$. Then $M|X=(X,\textbf{I}_{X})$ is a
matroid called the restriction of $M$ to $X$, where $\textbf{I}_{X}=\{I\subseteq X:I\in \textbf{I}\}$.
\end{definition}

\begin{definition}(Contraction~\cite{Lai01Matroid})
\label{definition9}
Let $M=(U,\textbf{I})$ be a matroid, $T\subseteq U$ and $B_{T}$ be a base of $M|T$, i.e., $B_{T}\in \textbf{B}(M|T)$. Then $M/T=(U-T,\textbf{I}^{'})$ is a matroid called
the contraction of $M$ to $U-T$, where $\textbf{I}^{'}=\{I\subseteq U-T:I\bigcup B_{T}\in \textbf{I}\}$. (The definition of $M/T$ has no relationship with the selection of $B_{T}\in \textbf{B}(M|T)$)
\end{definition}

The following proposition shows a relationship of ranks between a matroid and the restriction of the matroid to a subset.

\begin{proposition}(\cite{Lai01Matroid})
\label{proposition6}
Let $M=(U,\textbf{I})$ be a matroid and $T\subseteq U$.
For all $X\subseteq U-T$, $r_{M/T}(X)=r_{M}(X\bigcup T)-r_{M}(T)$.
\end{proposition}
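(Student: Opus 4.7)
The plan is to establish the identity by proving both inequalities, using throughout that $|B_T|=r_M(T)$ since $B_T$ is a maximum independent subset of $T$ in $M$.

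For the inequality $r_{M/T}(X)\leq r_M(X\bigcup T)-r_M(T)$, I would take any $I\in\textbf{I}'$ with $I\subseteq X$. By Definition~\ref{definition9}, $I\bigcup B_T\in\textbf{I}$, and the two sets are disjoint since $I\subseteq X\subseteq U-T$ while $B_T\subseteq T$. Their union lies in $X\bigcup T$, so $|I|+|B_T|=|I\bigcup B_T|\leq r_M(X\bigcup T)$, hence $|I|\leq r_M(X\bigcup T)-r_M(T)$. Taking the maximum over such $I$ yields the desired bound.

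For the reverse inequality, I would extend $B_T$ to a maximum independent subset $J$ of $X\bigcup T$. Starting from the independent set $B_T\subseteq X\bigcup T$, if $|B_T|<r_M(X\bigcup T)$ then there is a larger independent set $J_0\subseteq X\bigcup T$, and (I3) of Definition~\ref{definition3} produces an element $e\in J_0-B_T\subseteq X\bigcup T$ so that $B_T\bigcup\{e\}$ is still independent; iterating yields $J$ with $B_T\subseteq J\subseteq X\bigcup T$ and $|J|=r_M(X\bigcup T)$. Setting $I=J-T\subseteq X$, I would argue that $J\bigcap T=B_T$: indeed $J\bigcap T$ is independent as a subset of $J$ and contains the base $B_T$ of $M|T$, so maximality of $B_T$ in $\textbf{B}(M|T)$ forces equality. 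Consequently $J=I\bigcup B_T\in\textbf{I}$, so $I\in\textbf{I}'$ by Definition~\ref{definition9}, and $|I|=|J|-|B_T|=r_M(X\bigcup T)-r_M(T)\leq r_{M/T}(X)$.

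The step I expect to be most delicate is the second direction, in particular justifying that $B_T$ can be enlarged to a \emph{maximum} independent subset of $X\bigcup T$ without leaving $X\bigcup T$. The content there is really the observation that the restriction $M|(X\bigcup T)$ is itself a matroid (Definition~\ref{definition8}), so the matroid augmentation axiom applies internally to $X\bigcup T$; everything else amounts to bookkeeping with the disjoint decomposition $J=(J\bigcap X)\bigcup(J\bigcap T)=I\bigcup B_T$ and the observation that the final formula is independent of which particular $B_T\in\textbf{B}(M|T)$ we chose.
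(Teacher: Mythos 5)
The paper states Proposition~\ref{proposition6} only as a cited result from \cite{Lai01Matroid} and supplies no proof of its own, so there is nothing internal to compare your argument against. Your proof is correct and is the standard one: the upper bound uses that $I$ and $B_{T}$ are disjoint with $I\bigcup B_{T}\subseteq X\bigcup T$ and $|B_{T}|=r_{M}(T)$, and the lower bound augments $B_{T}$ inside $X\bigcup T$ to a maximum independent set $J$ and identifies $J\bigcap T=B_{T}$ by maximality of $B_{T}$ in $\textbf{B}(M|T)$. The step you flag as delicate is handled properly: applying (I3) to $B_{T}$ against a maximum independent subset $J_{0}$ of $X\bigcup T$ yields an augmenting element in $J_{0}-B_{T}\subseteq X\bigcup T$, so the iteration never leaves $X\bigcup T$; the only ingredient left tacit is that every base of $M|T$ has cardinality $r_{M}(T)$, which likewise follows from (I3) and is consistent with the paper's parenthetical remark that $M/T$ is independent of the choice of $B_{T}$.
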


In the following four propositions, we investigate the relationships between some characteristics of the contraction of the dual matroid to the complement of a single point set and to the complement of the equivalence class of this point, respectively, such as independent sets, bases, rank functions and circuits.

\begin{proposition}
\label{proposition7}
Let $R$ be an equivalence relation on $U$. For all $x\in U$, $\textbf{I}(M^{*}(R)/\{x\})= \textbf{I}(M^{*}(R)/RN(x))$.
\end{proposition}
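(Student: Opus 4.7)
The plan is to compute both sides explicitly using Definition~\ref{definition9} with a conveniently chosen base, and then observe that the independence condition in $\textbf{I}^{*}(R)$ already forces the stronger set-inclusion in $U-RN(x)$.

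First I would pin down a base for each of the restrictions $M^{*}(R)|\{x\}$ and $M^{*}(R)|RN(x)$. Since $\{x\}\in \textbf{I}^{*}(R)$ by Corollary~\ref{corollary3}, $\{x\}$ is trivially a base of $M^{*}(R)|\{x\}$. For $M^{*}(R)|RN(x)$, Corollary~\ref{corollary3} says that any $Y\subseteq RN(x)$ with $|Y|\ge 2$ fails $|RN(x)\cap Y|\le 1$ and therefore is dependent; on the other hand every singleton inside $RN(x)$ is independent, so $\{x\}$ is also a base of $M^{*}(R)|RN(x)$. Since the definition of contraction is independent of the chosen base, I may use $B_{\{x\}}=B_{RN(x)}=\{x\}$ in both contractions.

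Next I would unfold Definition~\ref{definition9} to get
\begin{align*}
\textbf{I}(M^{*}(R)/\{x\}) &= \{I\subseteq U-\{x\} : I\cup\{x\}\in \textbf{I}^{*}(R)\},\\
\textbf{I}(M^{*}(R)/RN(x)) &= \{I\subseteq U-RN(x) : I\cup\{x\}\in \textbf{I}^{*}(R)\}.
\end{align*}
The crucial step is to observe, via Corollary~\ref{corollary3}, that $I\cup\{x\}\in \textbf{I}^{*}(R)$ forces $|RN(x)\cap(I\cup\{x\})|\le 1$; since $x\in RN(x)\cap(I\cup\{x\})$ already, this is equivalent to $I\cap RN(x)=\emptyset$, i.e.\ $I\subseteq U-RN(x)$. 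Hence the apparently weaker constraint $I\subseteq U-\{x\}$ in the first formula is automatically upgraded to $I\subseteq U-RN(x)$ whenever $I\cup\{x\}\in \textbf{I}^{*}(R)$. This shows $\textbf{I}(M^{*}(R)/\{x\})\subseteq \textbf{I}(M^{*}(R)/RN(x))$, and the reverse inclusion is immediate because $U-RN(x)\subseteq U-\{x\}$.

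I do not anticipate a genuine obstacle here; the only delicate point is that the two contractions formally have different ground sets ($U-\{x\}$ versus $U-RN(x)$), so one has to be careful that the claimed set-theoretic equality of the two families of independent sets is legitimate. The computation above shows that every member of $\textbf{I}(M^{*}(R)/\{x\})$ in fact lies in $U-RN(x)$, which is what removes that apparent discrepancy.
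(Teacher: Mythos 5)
Your proof is correct and follows essentially the same route as the paper's: both arguments choose $\{x\}$ as the common base of $M^{*}(R)|\{x\}$ and $M^{*}(R)|RN(x)$, unfold Definition~\ref{definition9} to the same two families, and then observe that $I\cup\{x\}\in \textbf{I}^{*}(R)$ already forces $I\cap RN(x)=\emptyset$. Your version merely spells out the base-verification and the two inclusions slightly more explicitly than the paper does.
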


\begin{proof}
According to Proposition~\ref{proposition4} and Corollary~\ref{corollary3}, we know $\{x\}\in \textbf{B}(M^{*}(R)|\{x\})$ and $\{x\}\in \textbf{B}(M^{*}(R)|RN(x))$.
Thus $\textbf{I}(M^{*}(R)/\{x\})=\{I\subseteq U-\{x\}:I\bigcup \{x\}\in \textbf{I}^{*}(R)\}$, $\textbf{I}(M^{*}(R)/RN(x))=$
$\{I\subseteq U-RN(x):I\bigcup \{x\}\in \textbf{I}^{*}(R)\}$.
For all $Y\subseteq RN(x)-\{x\}$ and $Y\neq \emptyset$, $Y\bigcup \{x\}\notin \textbf{I}^{*}(R)$. Thus $\textbf{I}(M^{*}(R)/\{x\})=\{I\subseteq U-\{x\}:I\bigcup \{x\}\in \textbf{I}^{*}(R)\}=\{I\subseteq U-RN(x):I\bigcup \{x\}\in $ $\textbf{I}^{*}(R)\}$. Hence $\textbf{I}(M^{*}(R)/\{x\})=\textbf{I}(M^{*}(R)/RN(x))$. This completes the proof.
\end{proof}

\begin{proposition}
\label{proposition9}
Let $R$ be an equivalence relation on $U$. For all $x\in U$, $\textbf{B}(M^{*}(R)/\{x\})=\textbf{B}(M^{*}(R)/RN(x))$.
\end{proposition}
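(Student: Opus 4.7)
The plan is to deduce this proposition almost immediately from Proposition~\ref{proposition7}. Recall from Definition~\ref{definition6} that for any matroid $N$, the family of bases is simply $\textbf{B}(N)=Max(\textbf{I}(N))$, i.e., the set of maximal elements (with respect to inclusion) of the family of independent sets. Hence, once two matroids are shown to have the same family of independent sets, their families of bases coincide automatically, as $Max$ depends only on the underlying family.

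Concretely, I would first write $\textbf{B}(M^{*}(R)/\{x\})=Max(\textbf{I}(M^{*}(R)/\{x\}))$ and $\textbf{B}(M^{*}(R)/RN(x))=Max(\textbf{I}(M^{*}(R)/RN(x)))$. Then I would invoke Proposition~\ref{proposition7}, which gives $\textbf{I}(M^{*}(R)/\{x\})=\textbf{I}(M^{*}(R)/RN(x))$. Applying $Max$ to both sides yields the desired equality. The only mild subtlety, which Proposition~\ref{proposition7} has already resolved, is that the nominal ground sets $U-\{x\}$ and $U-RN(x)$ differ; but the proof of Proposition~\ref{proposition7} showed that in fact $\textbf{I}(M^{*}(R)/\{x\})=\{I\subseteq U-RN(x):I\cup\{x\}\in\textbf{I}^{*}(R)\}$, because any $I\subseteq U-\{x\}$ meeting $RN(x)-\{x\}$ would force $I\cup\{x\}$ to contain two elements of the equivalence class $RN(x)$ and thus violate the characterization of $\textbf{I}^{*}(R)$ in Corollary~\ref{corollary3}. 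So the two independent-set families literally coincide as subsets of $2^{U}$, and hence so do their maxima.

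There is essentially no obstacle here; the proposition is a direct corollary of Proposition~\ref{proposition7} together with the definition of a base as a maximal independent set. An alternative route, if one prefers not to cite Proposition~\ref{proposition7}, would be to characterize the bases of each contraction using Definition~\ref{definition9}: a base of $M^{*}(R)/T$ (for $T\in\{\{x\},RN(x)\}$) is a maximal subset $B$ of $U-T$ with $B\cup\{x\}\in\textbf{B}(M^{*}(R))$, and by Proposition~\ref{proposition4} no such $B$ can meet $RN(x)-\{x\}$, so the two descriptions produce the same family. But the one-line argument via Proposition~\ref{proposition7} is cleaner and is the route I would take.
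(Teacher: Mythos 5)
Your proposal is correct and follows exactly the paper's own argument: express each family of bases as $Max$ of the corresponding family of independent sets via Definition~\ref{definition6}, then apply Proposition~\ref{proposition7}. Your extra remark about the differing nominal ground sets is a welcome clarification but does not change the route.
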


\begin{proof}
According to Definition~\ref{definition6}, $\textbf{B}(M^{*}(R)/\{x\})=Max(\textbf{I}(M^{*}(R)/$ $\{x\}))$,
and $\textbf{B}(M^{*}(R)/RN(x))=Max(\textbf{I}(M^{*}(R)/RN(x)))$.
According to Proposition~\ref{proposition7}, $\textbf{I}(M^{*}(R)/\{x\})=\textbf{I}(M^{*}(R)/RN(x))$.
Thus $\textbf{B}(M^{*}(R)/$ $\{x\})=\textbf{B}(M^{*}(R)/RN(x))$. This completes the proof.
\end{proof}

\begin{proposition}
\label{proposition10}
Let $R$ be an equivalence relation on $U$. For all $x\in U$ and $X\subseteq U-RN(x)$, $r_{M^{*}(R)/\{x\}}(X)=r_{M^{*}(R)/RN(x)}(X)$.
\end{proposition}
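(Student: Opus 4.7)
The plan is to reduce both sides to expressions in the rank function $r_{M^{*}(R)}$ via Proposition~\ref{proposition6}, then evaluate each term using the explicit formula for $r_{M^{*}(R)}$ given in Corollary~\ref{corollary4}, and finally exploit the hypothesis $X \subseteq U - RN(x)$ to see the two expressions coincide.

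First, I would apply Proposition~\ref{proposition6} with $M = M^{*}(R)$ and $T = \{x\}$ (respectively $T = RN(x)$) to obtain
\[
r_{M^{*}(R)/\{x\}}(X) = r_{M^{*}(R)}(X \cup \{x\}) - r_{M^{*}(R)}(\{x\}),
\]
\[
r_{M^{*}(R)/RN(x)}(X) = r_{M^{*}(R)}(X \cup RN(x)) - r_{M^{*}(R)}(RN(x)).
\]
This requires only that $X \subseteq U - \{x\}$ and $X \subseteq U - RN(x)$, both of which follow from the hypothesis.

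Next, by Corollary~\ref{corollary4}, for any $Y \subseteq U$ the value $r_{M^{*}(R)}(Y)$ is exactly the number of equivalence classes of $R$ that meet $Y$. Therefore $r_{M^{*}(R)}(\{x\}) = 1$ and $r_{M^{*}(R)}(RN(x)) = 1$, since in both cases only the single class $RN(x)$ is hit. For the other two terms, the key observation is that $X \cap RN(x) = \emptyset$, so adding either $\{x\}$ or all of $RN(x)$ to $X$ introduces exactly one new equivalence class, namely $RN(x)$, to the collection of classes already met by $X$. Hence
\[
r_{M^{*}(R)}(X \cup \{x\}) = r_{M^{*}(R)}(X) + 1 = r_{M^{*}(R)}(X \cup RN(x)).
\]

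Substituting these four values back into the expressions for the two contraction ranks gives the same quantity $r_{M^{*}(R)}(X)$ on both sides, which completes the argument. There is essentially no obstacle here once Proposition~\ref{proposition6} and Corollary~\ref{corollary4} are invoked; the only subtlety is noticing that the disjointness $X \cap RN(x) = \emptyset$ is exactly what guarantees the two "added" sets contribute the same single new equivalence class to the rank count.
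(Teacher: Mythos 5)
Your proof is correct, and its overall skeleton is the same as the paper's: both reduce the two contraction ranks via Proposition~\ref{proposition6} and use Corollary~\ref{corollary4} to get $r_{M^{*}(R)}(\{x\})=r_{M^{*}(R)}(RN(x))=1$. The only divergence is in the remaining step, establishing $r_{M^{*}(R)}(X\cup\{x\})=r_{M^{*}(R)}(X\cup RN(x))$. The paper shows that adjoining any single $y\in RN(x)-\{x\}$ to $X\cup\{x\}$ leaves the rank unchanged and then invokes the general rank-extension lemma (Proposition~\ref{proposition5}) to absorb all of $RN(x)$ at once. You instead evaluate both ranks directly from the counting formula of Corollary~\ref{corollary4}: since $X\cap RN(x)=\emptyset$, each of $X\cup\{x\}$ and $X\cup RN(x)$ meets exactly the classes met by $X$ together with the one new class $RN(x)$, so both ranks equal $r_{M^{*}(R)}(X)+1$. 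Your route is slightly more elementary and self-contained for this particular matroid, since it bypasses Proposition~\ref{proposition5} entirely; the paper's route illustrates a general matroid-theoretic technique that would still work if no closed-form rank formula were available. Both are valid.
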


\begin{proof}
For all $X\subseteq U-RN(x)$, $X\subseteq U-\{x\}$.
According to Proposition~\ref{proposition6}, $r_{M^{*}(R)/\{x\}}(X)=r_{M^{*}(R)}(X\bigcup \{x\})-r_{M^{*}(R)}(\{x\})$ and $r_{M^{*}(R)/RN(x)}$\\$(X)=r_{M^{*}(R)}(X\bigcup RN(x))-r_{M^{*}(R)}(RN(x))$.
According to Corollary~\ref{corollary4}, $r_{M^{*}(R)}(\{x\})=r_{M^{*}(R)}(RN(x))=1$.
So, we only need to proof $r_{M^{*}(R)}(X\bigcup \{x\})=r_{M^{*}(R)}(X\bigcup RN(x))$.
For all $y\in (X\bigcup RN(x))-(X\bigcup \{x\})=RN(x)-\{x\}$, $r_{M^{*}(R)}(X\bigcup \{x\})=r_{M^{*}(R)}(X\bigcup \{x,y\})$.
According to Proposition~\ref{proposition5}, $r_{M^{*}(R)}(X\bigcup \{x\})=r_{M^{*}(R)}((X\bigcup \{x\})\bigcup (X$ $\bigcup RN(x)))= r_{M^{*}(R)}(X\bigcup RN(x))$. This completes the proof.
\end{proof}

\begin{proposition}
\label{proposition8}
Let $R$ be an equivalence relation on $U$. For all $x\in U$, $\textbf{C}(M^{*}(R)/RN(x))\subseteq \textbf{C}(M^{*}(R)/\{x\})$.
\end{proposition}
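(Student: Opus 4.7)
The plan is to reduce this to Proposition~\ref{proposition7}, which already tells us that $\textbf{I}(M^{*}(R)/\{x\}) = \textbf{I}(M^{*}(R)/RN(x))$ as families of sets. Since $\textbf{C}(N) = Min(2^{E(N)} - \textbf{I}(N))$ for any matroid $N$ on ground set $E(N)$, and the two contractions have different ground sets ($U-\{x\}$ versus $U-RN(x)$), I would focus on verifying that any circuit lying in the smaller ground set $U - RN(x)$ still behaves as a circuit in the larger one.

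First, I would observe (and justify from the proof of Proposition~\ref{proposition7}) that $\textbf{I}(M^{*}(R)/\{x\})$ actually only contains subsets of $U-RN(x)$: if $I \subseteq U-\{x\}$ meets $RN(x)-\{x\}$ in some element $y$, then $I\cup\{x\}$ contains $\{x,y\}$ with $|RN(x)\cap\{x,y\}|=2$, so $I\cup\{x\}\notin \textbf{I}^{*}(R)$ by Corollary~\ref{corollary3}, hence $I\notin \textbf{I}(M^{*}(R)/\{x\})$. Combined with Proposition~\ref{proposition7}, this means both independent-set families coincide and both live inside $2^{U-RN(x)}$.

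Next I would take an arbitrary $C\in \textbf{C}(M^{*}(R)/RN(x))$ and check directly that it is a circuit of $M^{*}(R)/\{x\}$. Since $C\subseteq U-RN(x)\subseteq U-\{x\}$, $C$ is a valid subset of the ground set of $M^{*}(R)/\{x\}$. Minimal dependence in $M^{*}(R)/RN(x)$ says $C\notin \textbf{I}(M^{*}(R)/RN(x))$ while every proper subset $C'\subsetneq C$ satisfies $C'\in \textbf{I}(M^{*}(R)/RN(x))$. By the identification of independent sets above, the same holds with $M^{*}(R)/RN(x)$ replaced by $M^{*}(R)/\{x\}$; therefore $C$ is a minimal dependent set of $M^{*}(R)/\{x\}$, i.e., $C\in \textbf{C}(M^{*}(R)/\{x\})$.

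The only subtle point — and the reason equality fails while inclusion holds — is the ground-set mismatch. In $M^{*}(R)/\{x\}$ each element $y\in RN(x)-\{x\}$ is a loop (since $\{x,y\}\notin \textbf{I}^{*}(R)$), giving extra singleton circuits $\{y\}$ which simply do not live in the ground set $U-RN(x)$ of $M^{*}(R)/RN(x)$. So the careful part of the write-up is making the distinction between the two ground sets explicit when invoking Proposition~\ref{proposition7}; no axiom checking or rank computation is needed beyond that.
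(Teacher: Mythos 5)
Your proposal is correct and follows essentially the same route as the paper: both reduce to Proposition~\ref{proposition7}, express circuits as $Min$ of the complements of the (identical) independent-set families over the two nested ground sets $2^{U-RN(x)}\subseteq 2^{U-\{x\}}$, and conclude the inclusion. Your write-up is in fact slightly more careful than the paper's, which asserts $Min((\textbf{I}(M^{*}(R)/RN(x)))^{c})\subseteq Min((\textbf{I}(M^{*}(R)/\{x\}))^{c})$ with a bare ``Therefore,'' whereas you explicitly verify that minimality transfers (every proper subset of a circuit stays independent in the larger matroid) and correctly identify the loops $\{y\}$, $y\in RN(x)-\{x\}$, as the reason the inclusion is not an equality.
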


\begin{proof}
According to Proposition~\ref{proposition7}, $\textbf{I}(M^{*}(R)/\{x\})=\textbf{I}(M^{*}(R)/RN(x))$.
According to Definition~\ref{definition5}, $\textbf{C}(M^{*}(R)/\{x\})=Min((\textbf{I}(M^{*}(R)/\{x\}))^{c})$, where $(\textbf{I}(M^{*}(R)/\{x\}))^{c}=2^{U-\{x\}}-\textbf{I}(M^{*}(R)/\{x\})$, and $\textbf{C}(M^{*}(R)/$ $RN(x))=Min((\textbf{I}(M^{*}(R)/RN(x)))^{c})$, where $(\textbf{I}(M^{*}(R)$ $/RN(x)))^{c}=2^{U-RN(x)}-\textbf{I}(M^{*}(R)/RN(x))$.
Therefore, $Min((\textbf{I}(M^{*}(R)/RN(x)))^{c})$ $\subseteq Min((\textbf{I}(M^{*}(R)/\{x\}))^{c})$, i.e., $\textbf{C}(M^{*}(R)/RN(x))\subseteq \textbf{C}(M^{*}(R)/\{x\})$.
This completes the proof.
\end{proof}

In the following proposition, we study when the contraction of the dual matroid to the complement of a single point set and the contraction of the dual matroid to the complement of the equivalence class of this point have the same circuits.

\begin{proposition}
\label{proposition11}
Let $R$ be an equivalence relation on $U$. For all $x\in U$, $\textbf{C}(M^{*}(R)/\{x\}|(U-RN(x)))=\textbf{C}(M^{*}(R)/RN(x))$.
\end{proposition}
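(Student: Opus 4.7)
The plan is to identify the two matroids $M^{*}(R)/\{x\}|(U-RN(x))$ and $M^{*}(R)/RN(x)$ as literally the same matroid; once this is done the equality of circuit families is immediate from Definition~\ref{definition5}, since circuits are determined by the ground set together with the family of independent (equivalently, dependent) sets.

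First, I would check that both matroids have the same ground set. By Definition~\ref{definition8} applied to $M^{*}(R)/\{x\}$ and $U-RN(x)\subseteq U-\{x\}$, the restriction has ground set $U-RN(x)$, which coincides with the ground set of $M^{*}(R)/RN(x)$ given by Definition~\ref{definition9}. Next I would compare independent-set families. Unwinding Definition~\ref{definition8} gives
$\textbf{I}(M^{*}(R)/\{x\}|(U-RN(x)))=\{I\subseteq U-RN(x):I\in \textbf{I}(M^{*}(R)/\{x\})\}$. The proof of Proposition~\ref{proposition7} already showed that $\textbf{I}(M^{*}(R)/\{x\})$ is precisely $\{I\subseteq U-RN(x):I\cup\{x\}\in \textbf{I}^{*}(R)\}$, which is exactly $\textbf{I}(M^{*}(R)/RN(x))$. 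Intersecting once more with the condition $I\subseteq U-RN(x)$ changes nothing, so the two independent-set families coincide.

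Two matroids with the same ground set and the same family of independent sets are identical, so they share the same dependent sets, and therefore the same minimal dependent sets. By Definition~\ref{definition5} this yields $\textbf{C}(M^{*}(R)/\{x\}|(U-RN(x)))=\textbf{C}(M^{*}(R)/RN(x))$. The only subtle point, and the thing to state clearly, is that $M^{*}(R)/\{x\}$ lives on $U-\{x\}$ and has a loop at every point of $RN(x)-\{x\}$ (since for such $y$, $\{x,y\}\notin \textbf{I}^{*}(R)$ by Corollary~\ref{corollary3}); the restriction to $U-RN(x)$ is precisely the operation that discards those loops, leaving a matroid structurally identical to the contraction $M^{*}(R)/RN(x)$. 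No extra inequality chasing via Proposition~\ref{proposition8} is needed once this identification is in place.
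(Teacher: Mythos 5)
Your proposal is correct and follows essentially the same route as the paper: both arguments reduce the claim to the equality of independent-set families established in Proposition~\ref{proposition7}, note that restricting $M^{*}(R)/\{x\}$ to $U-RN(x)$ does not change those independent sets, and then conclude via Definition~\ref{definition5} that the circuit families (minimal dependent sets over the common ground set $U-RN(x)$) coincide. Your added remark that the points of $RN(x)-\{x\}$ are loops of $M^{*}(R)/\{x\}$ is a nice clarification of why the restriction is harmless, but it is not a departure from the paper's argument.
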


\begin{proof}
According to Proposition~\ref{proposition7}, $\textbf{I}(M^{*}(R)/\{x\})=\textbf{I}(M^{*}(R)/RN(x))$ and $\textbf{I}(M^{*}(R)/\{x\})=$ $\textbf{I}(M^{*}(R)/\{x\}|(U-RN(x)))$.
According to Definition~\ref{definition5}, $\textbf{C}(M^{*}(R)/\{x\}|(U-RN(x)))=Min((\textbf{I}(M^{*}(R)/\{x\}|(U-RN(x))))^{c})$, where $(\textbf{I}(M^{*}(R)/\{x\}|(U-RN(x))))^{c}=2^{U-RN(x)}-\textbf{I}(M^{*}($ $R)/\{x\}|(U-RN(x)))=2^{U-RN(x)}-\textbf{I}(M^{*}($ $R)/\{x\})$, and $\textbf{C}(M^{*}(R)/$ $RN(x))=Min((\textbf{I}(M^{*}(R)/RN(x)))^{c})$, where $(\textbf{I}(M^{*}(R)/RN(x)))^{c}=2^{U-RN(x)}-\textbf{I}(M^{*}(R)/RN(x))$.
Hence, $\textbf{C}(M^{*}($ $R)/\{x\}|(U-RN(x)))=\textbf{C}(M^{*}(R)/RN(x))$.
This completes the proof.
\end{proof}

As we know, if a single point set is the equivalence class of this point, then the contraction of the dual matroid to the complement of the single point set and the contraction of the dual matroid to the complement of the equivalence class of this point have the same independent sets, circuits, bases and rank functions.
%%%%%%%%%%%%%%%%%%%%%%%%%%%%%%%%%%%%%%%%%%%%%%%%%%%%%%%%%%%%%%%%%%%%%%%%%%%%%%%
%%%%%%%%%%%%%%%%%%%%%%%%%%%%%%%%%%%%%%%%%%%%%%%%%%%%%%%%%%%%%%%%%%%%%%%%%%%%%%%

%%%%%%%%%%%%%%%%%%%%%%%%%%%%%%%%%%%%%%%%%%%%%%%%%%%%%%%%%%%%%%%%%%%%%%%%%%%%%%%
%%%%%%%%%%%%%%%%%%%%%%%%%%%%%%%%%%%%%%%%%%%%%%%%%%%%%%%%%%%%%%%%%%%%%%%%%%%%%%%
\section{Conclusions}
\label{section3}
\label{S:Conclusions}
In this paper, we establish a matroidal structure of rough sets by the lower approximation operator.
We provide the dual of the matroid and its properties such as independent sets, bases and rank function.
Moreover, we study the relationships between the contraction of the dual matroid to the complement of a single point set and the contraction of the dual matroid to the complement of the equivalence class of this point.
We will do more works in combining rough sets and matroids.

%%%%%%%%%%%%%%%%%%%%%%%%%%%%%%%%%%%%%%%%%%%%%%%%%%%%%%%%%%%%%%%%%%%%%%%%%%%%%%%
%%%%%%%%%%%%%%%%%%%%%%%%%%%%%%%%%%%%%%%%%%%%%%%%%%%%%%%%%%%%%%%%%%%%%%%%%%%%%%%
\section{Acknowledgments}
This work is supported in part by the National Natural Science Foundation of China under Grant No.
61170128, the Natural Science Foundation of Fujian Province, China, under Grant Nos.
2011J01374 and 2012J01294, and the Science and Technology Key Project of Fujian Prov-ince, China, under Grant No.
2012H0043.

%%%%%%%%%%%%%%%%%%%%%%%%%%%%%%%%%%%%%%%%%%%%%%%%%%%%%%%%%%%%%%%%%%%%%%%%%%%%%%%
%%%%%%%%%%%%%%%%%%%%%%%%%%%%%%%%%%%%%%%%%%%%%%%%%%%%%%%%%%%%%%%%%%%%%%%%%%%%%%%

%\bibliographystyle{splncs}
%
%\bibliography{Roughsets}
% Set the ending of a LaTeX document

\end{document}